\newtheorem{theorem}{Theorem}
\newtheorem{lemma}[theorem]{Lemma}
\newcommand{\jac}[1]{D\mkern-0.75mu{#1}}
\begin{document}

\title{Strong anti-Hebbian plasticity alters the convexity of network attractor landscapes}

\author{Lulu Gong$^{*}$,~\IEEEmembership{Member,~IEEE,}, Xudong Chen,~\IEEEmembership{Member,~IEEE,}, and ShiNung Ching,~\IEEEmembership{Member,~IEEE,},
\thanks{This work is partially supported by ARO-MURI  grant W911NF2110312 from the US Department of Defense. All authors are with the Department of Electrical and Systems Engineering, Washington University in St. Louis, 63130, USA.}
\thanks{The corresponding author: L. Gong;  Email address:  glulu@wustl.edu. }
}



\maketitle

\begin{abstract}
In this paper, we study recurrent neural networks in the presence of pairwise learning rules. We are specifically interested in how the attractor landscapes of such networks become altered as a function of the strength and nature (Hebbian vs. anti-Hebbian) of learning, which may have a bearing on the ability of such rules to mediate large-scale optimization problems. Through formal analysis, we show that a transition from Hebbian to anti-Hebbian learning brings about a pitchfork bifurcation that destroys convexity in the network attractor landscape. In larger-scale settings, this implies that anti-Hebbian plasticity will bring about multiple stable equilibria, and such effects may be outsized at interconnection or `choke' points. Furthermore, attractor landscapes are more sensitive to slower learning rates than faster ones. These results provide insight into the types of objective functions that can be encoded via different pairwise plasticity rules. 
\end{abstract}

\begin{IEEEkeywords}
Recurrent neural networks, Hebbian learning, anti-Hebbian learning, attractors.
\end{IEEEkeywords}

\section{Introduction}
\IEEEPARstart{I}{n} the context of optimization and learning, pairwise rules present an enigma. On the one hand, they are well validated from a biological perspective and are thought to be the prevailing process of learning at the neuronal scale. On the other hand, it has proven to be a considerable bottleneck to generate functional instances of such rules in artificial neural network settings, and this is a persistent enigma in theoretical neuroscience. The attractor landscape, or vector field, of recurrent networks is a fundamental characterization of their time evolution, and hence can provide key information about how such networks can maximize objective functions. For instance, solving a convex objective would nominally require a network with a single, asymptotically stable equilibrium point and such is, of course, the premise underlying classical gradient-based optimization frameworks. The specific question we address here involves characterizing the sensitivity of attractor landscapes to variations in learning rate, or `strength' of plasticity. Implied in this is a differentiation of landscapes under Hebbian vs. anti-Hebbian learning. 

More broadly, the attractor landscape of recurrent networks is a frequent topic of theoretical study in other cognitive contexts \cite{deco2013brain,khona2022attractor}. Particularly in associative memory, asymptotically stable equilibria (i.e., fixed point attractors) in the state space of networks are thought to encode memories \cite{hopfield1982neural},  and the density of these equilibria is regarded as a measure of network memory capacity \cite{mceliece1987capacity,wu2012storage}. Understanding how memory capacity is achieved or maximized in finite-size networks has been a frequent topic of research in theoretical neuroscience, e.g., \cite{storkey1997increasing,krotov2016dense,demircigil2017model,krotov2020large}. Many of these analyses focus on how the synaptic connectivity, i.e., the weight matrix, of a network impacts attractor density and stability. However, in these works, the weight matrix is usually treated as being static. An important distinction in this regard is the goal of learning a set of weights that give rise to specific, multiple equilibria in a state-space of neural activity vs. ongoing plasticity wherein the weight configurations are themselves part of the state-space and hence attractor landscape.

Indeed, the attractor landscapes of recurrent neural networks (RNNs) with ongoing pairwise plasticity are less well-characterized, even in low-dimensional settings.
In \cite{dong1992dynamic,Zucker2012},  recurrent neural network models with standard Hebbian learning have been studied.
It is shown that under assumptions of symmetry in the dynamics, trajectories will evolve to stable equilibria in the state space.
In \cite{DAUCE1998521,Siri2007,Siri2008}, the authors numerically demonstrated that in large random RNNs, Hebbian-like learning can lead the system transition from chaos to having stable equilibria via bifurcations with respect to exogenous stimuli. 
Recently, similar models have also been analyzed using contraction theory \cite{kozachkov2020achieving,9993009}. There, it has been shown that the network dynamics are contracting under certain parameter conditions, which implies that the population activity of the network will converge towards well-defined limit sets. Our work is in a similar spirit, insofar as we examine how different forms of pairwise plasticity may alter the limiting behavior of the networks in question.

Motivated by these questions, in this study, we investigate the intrinsic dynamics of a general RNN with both Hebbian and anti-Hebbian learning (RNN-HL) forms within both uni- and bidirectional connectivity motifs. 
These minimal network structures serve as the building blocks of more complex RNN-HL networks, and thus it is useful to study their dynamic properties in depth. 
The nonlinearity inherent in the RNN-HL model, specifically through its activation functions, means that the exact analytical calculation of the attractor landscape is intractable in the general case. However, by making strategic simplifications, we are able to provide insights into how learning rate and direction lead to both standard and imperfect pitchfork bifurcations \cite{rajapakse2017pitchfork}. While such dynamics 
are common in neural dynamical models
\cite{hoppensteadt2012weakly,dayan2005theoretical,cheng2006multistability} with fixed connectivity, they have not been as well-studied in the presence of Hebbian learning dynamics.

Our results indicate the differential roles of anti-Hebbian vs. Hebbian mechanisms may have in shaping the attractor landscape of networks in the context of learning. Here, anti-Hebbian mechanisms are found to be associated with multiple stable equilibria that would preclude `global' convergent solutions. In contrast, Hebbian mechanisms encourage unique equilibrium and are more robust to parameter changes.
The remaining part is structured as follows. In Section II we
introduce mathematical models and the problem formulation. In Section III,  we first establish conditions for the unique and multiple equilibria for the general minimal models. Then, for the simplified system with bidirectional connections, we prove
the existence of a pitchfork bifurcation that results in the onset of multiple equilibria,  and we characterize the dynamics locally and globally under certain parameter conditions.  
In Section IV, We empirically confirm that the increase in the number of equilibria, resulting from variations in the learning rate parameter, is consistent in moderate-sized networks.
We conclude in Section V.

\section{Mathematical models of RNN with Hebbian learning}
\subsection{Simultaneous dynamics of RNN with generic Hebbian learning}
Consider a network of $n$ neurons. Let $x_i$ be the state of the neural node $i$ and $w_{ij}$ be the weight of the synapse from node $i$ to $j$. If there are no external inputs, according to \cite{kozachkov2020achieving}, the RNN with dynamic synaptic connections governed by Hebbian learning can be represented by the autonomous system
\begin{equation}\label{rnn_hl}
    \begin{aligned}
    &\dot{x}_i=-a_ix_i+\sum_{j=1}^nw_{ij}\phi(x_j), \quad i=1,...,n,\\
    &\dot{w}_{ij}=-b_{ij}w_{ij}+c_{ij}\phi(x_i)\phi(x_j), \quad i,j=1,...,n,
    \end{aligned}
\end{equation}
where $\phi(\cdot)$ is the activation function; $a_i, b_{ij}>0$ are the decaying parameters; $c_{ij}\in \mathbb{R}$ is a parameter whose absolute value can be interpreted as the learning rate.  When $c_{ij}$ takes a positive value, it is called the \emph{Hebbian learning}, and the case with $c_{ij}<0$ is the  \emph{anti-Hebbian learning} \cite{dayan2005theoretical}.  There exist many kinds of activation functions, such as the logistic sigmoid function, ReLU, and $tanh$. In this work, we consider exclusively the standard sigmoid activation function, i.e., $\phi(z):= 1/(1+e^{-z}) \in (0,1), ~z\in \mathbb{R}$. 

System \eqref{rnn_hl} consists of a feedback connection of the standard RNN model and the second-order Hebbian learning rule \cite{Gerstner2002}. We term it as the \emph{generalized RNN-HL model}, because of the expanded parameter space, i.e., the additional parameters $a_{i}$, $b_{ij}$, and $c_{ij}$ compared with those models in  \cite{kozachkov2020achieving,9993009}. This general model allows us to explore the dynamics of neurons and synapses in diverse and non-homogeneous settings. The dimension of system \eqref{rnn_hl} will vary between $2n-1$ and $n+n^2$ depending on synaptic connection structures.  The RNN-HL dynamics are in general too difficult to analyze because of the high dimension and nonlinear activation functions. So we are going to analyze the system dynamics by starting from simplified scenarios.

\subsection{Minimal models of RNN-HL}
 In neural networks, a synaptic connection involves a pair of neurons. These two neurons can have different synaptic connections between them, regarding the inhibitory and excitatory synapses. In the case of only two neurons, there exist two different minimal connection structures (as shown in Fig. \ref{fig:minimal_network}). In the first case,  the connections between Neurons $1$ and $2$ are bidirectional and can have asymmetric weights. 
\begin{figure}
\centering
	\includegraphics[width=8cm]{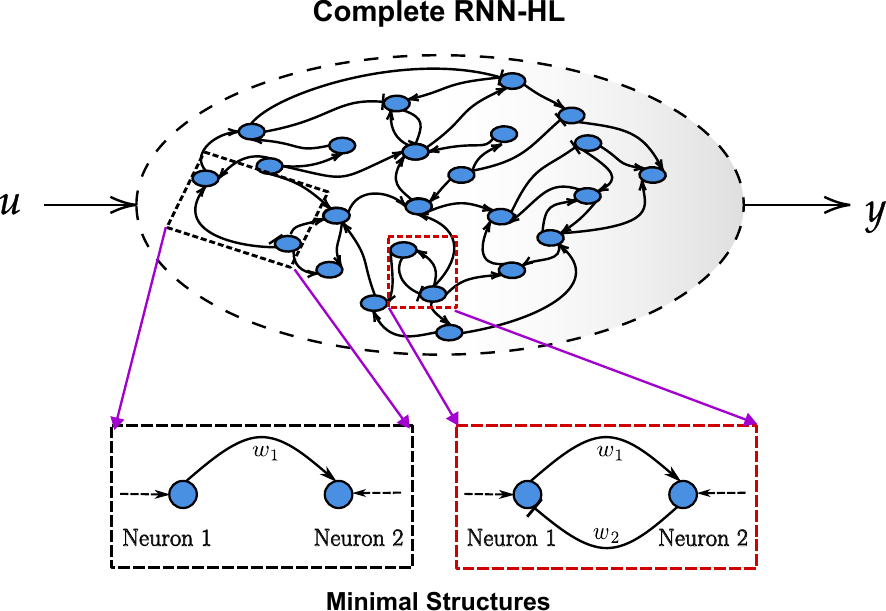}
	\caption{The minimal structures of a large RNN-HL network, where the edge with an arrow denotes an excitatory synapse, while the edge with a bar denotes an inhibitory synapse connection.}\label{fig:minimal_network}
\end{figure}
According to the RNN-HL model \eqref{rnn_hl}, the dynamics on this minimal network are given by the system
\begin{equation}\label{minimalnetwork}
 \begin{aligned}
 &\dot{x}_1=-a_1x_1+w_2\phi(x_2)+u_1\\
 &\dot{x}_2=-a_2x_2+w_1\phi(x_1)+u_2\\
 &\dot{w}_{1}=-b_{1}w_{1}+c_{1}\phi(x_1)\phi(x_2)\\
 &\dot{w}_{2}=-b_{2}w_{2}+c_{2}\phi(x_1)\phi(x_2),
 \end{aligned} 
\end{equation}
where $u_1(t), u_2(t) \in \mathbb{R}$ capture all the inputs from other nodes as well as external sources, and parameters $c_1,c_2 \in \mathbb{R}$ are non-zero. 
This minimal network model is $4$ dimensional. We are interested in the intrinsic behaviors of this model, i.e., system \eqref{minimalnetwork} without inputs. In the next section, we will investigate the dynamics by studying the equilibria and stability.

In the second case, when there is only one synapse between the two neurons, it does not matter which neuron is pre- or post-synaptic mathematically. 
So we consider the scenario with a single synapse that is directed from Neuron $1$ to Neuron $2$. Thus, the model is given by
\begin{equation}\label{minimalnetwork_onesynapse}
 \begin{aligned}
 &\dot{x}_1=-a_1x_1+u_1\\
 &\dot{x}_2=-a_2x_2+w_1\phi(x_1)+u_2\\
 &\dot{w}_{1}=-b_{1}w_{1}+c_{1}\phi(x_1)\phi(x_2).
 \end{aligned} 
\end{equation}
In the upcoming section, we will conduct a detailed analysis of the dynamics of both models. We are interested in the intrinsic behaviors of these models, i.e., system \eqref{minimalnetwork} or \eqref{minimalnetwork_onesynapse} without inputs. When the inputs remain constant, the dynamical system with inputs can be considered a regular perturbation of the autonomous system. Consequently, our emphasis will be on the autonomous versions of these two models.  We will investigate the dynamics by studying the equilibria and stability.

\section{Main Results of Minimal Models}
\subsection{Equilibria characterization}\label{equilibria_characterization}
In the absence of inputs, by letting the right-hand side of the two models \eqref{minimalnetwork} and \eqref{minimalnetwork_onesynapse} equal zero, one can compute the equilibria of the autonomous systems. 

We start with the second minimal model \eqref{minimalnetwork_onesynapse}, as it is much easier to analyze. The equilibria of this system are the solutions to the equations
\begin{equation}\label{eq_equations_onesynapse}
 \begin{aligned}
 &a_1x_1=0\\
 &a_2x_2=w_1\phi(x_1)\\
&b_{1}w_{1}=c_{1}\phi(x_1)\phi(x_2).
 \end{aligned} 
\end{equation}
The above equations can be reduced to be 
\[4a_2b_1x_2=c_1\phi(x_2).\]
As $\phi(x_2)$ is the standard sigmoid function, the equation admits a unique solution for all parameter conditions \cite{beer1995dynamics}. It follows that \eqref{minimalnetwork_onesynapse} always has a unique equilibrium.

Similarly, for the system \eqref{minimalnetwork}, the equilibria are the solutions to the following equations.
\begin{equation}\label{eq_equations}
 \begin{aligned}
 &a_1x_1=w_2\phi(x_2)\\
 &a_2x_2=w_1\phi(x_1)\\
 &b_{1}w_{1}=c_{1}\phi(x_1)\phi(x_2)\\
 &b_{2}w_{2}=c_{2}\phi(x_1)\phi(x_2).
 \end{aligned} 
\end{equation}

Considering $\phi(\cdot)\in (0,1)$, one can substitute $w_1=\frac{a_2x_2}{\phi(x_1)}$ and $w_2=\frac{a_1x_1}{\phi(x_2)}$ into the last two equations. Then,
the above equations turn out to be 
\begin{equation}\label{eq_equations1}
 \begin{aligned}
 &b_{1}a_2x_2=c_{1}\phi^2(x_1)\phi(x_2)\\
 &b_{2}a_1x_1=c_{2}\phi(x_1)\phi^2(x_2).
 \end{aligned} 
\end{equation}
However, due to the presence of logistic sigmoid functions, it is difficult or impossible to solve equation \eqref{eq_equations1} analytically. Instead of trying to get analytical solutions, we will try to determine the existence of the solutions and their positions in an alternative way.

Let $X:=[x_1,x_2,w_1,w_2]^\top$ be the state vector.
Based on \eqref{eq_equations}, we can define a function
\[F(X):=\begin{pmatrix}
    &\frac{w_2\phi(x_2)}{a_1}\\
    &\frac{w_1\phi(x_1)}{a_2}\\
    &\frac{c_1\phi(x_1)\phi(x_2)}{b_1}\\
    &\frac{c_2\phi(x_1)\phi(x_2)}{b_2}
    \end{pmatrix}
.\]
Then, proving \eqref{eq_equations} has solutions is equivalent to proving that the mapping $F$ admits fixed points, i.e., there exists a $X^*$ such that $F(X^*)=X^*$.

Next, define the set
\[\chi:=\left \{(x_1,x_2,w_1,w_2)\in \mathbb{R}^4: \begin{aligned}
&|x_1|,|x_2|\leq x_{\rm{max}}\\
&|w_1|,|w_2|\leq w_{\rm{max}}
\end{aligned} \right\},\]
with $w_{\rm{max}}:=\max\{|c_1|,|c_2|\}/\min\{b_1,b_2\}$ and $x_{\rm{max}}:=w_{\rm{max}}/\min\{a_1,a_2\}$. 
According to \cite[Lemma IV.1]{9993009}, $\chi$ is forward invariant and attractive with respect to \eqref{minimalnetwork}.  Then, by using Brouwer's fixed point theorem \cite{karamardian2014fixed}, we obtain the following result.
\begin{lemma}
System \eqref{minimalnetwork} has at least one equilibrium, and all the equilibria are contained in $\chi$.   
\end{lemma}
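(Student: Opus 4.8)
The plan is to obtain the equilibrium as a fixed point of the map $F$ via Brouwer's fixed point theorem, and then to bound any fixed point directly to conclude that it lies in $\chi$. First I would record the two structural facts that make Brouwer applicable on $\chi$: the set $\chi$ is a nonempty axis-aligned box in $\mathbb{R}^4$, hence compact and convex, and the map $F$ is continuous, since each of its four components is a product of the smooth sigmoid $\phi$ with constants divided by the strictly positive denominators $a_1,a_2,b_1,b_2$, which never vanish.

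The crux is to verify the self-map property $F(\chi)\subseteq\chi$, and this is precisely what the definitions of $w_{\max}$ and $x_{\max}$ are engineered to deliver. Using $\phi(\cdot)\in(0,1)$, the last two components of $F$ satisfy $|c_i\phi(x_1)\phi(x_2)/b_i|\le |c_i|/b_i\le w_{\max}$ for $i=1,2$, so the $w$-coordinates of $F(X)$ remain within $[-w_{\max},w_{\max}]$. Granting that, the first two components obey $|w_j\phi(\cdot)/a_i|\le |w_j|/a_i\le w_{\max}/\min\{a_1,a_2\}=x_{\max}$, so the $x$-coordinates remain within $[-x_{\max},x_{\max}]$. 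Hence $F$ maps $\chi$ into itself, and Brouwer's theorem yields a point $X^\ast\in\chi$ with $F(X^\ast)=X^\ast$; by construction this is an equilibrium of \eqref{minimalnetwork}, which settles the existence claim.

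For the containment claim, I would reuse the same bounds rather than invoke anything new: if $X^\ast$ is any equilibrium, then $X^\ast=F(X^\ast)$, so its $w$-coordinates are bounded by $w_{\max}$ and, consequently, its $x$-coordinates by $x_{\max}$, exactly as above, placing $X^\ast\in\chi$. Alternatively, the same conclusion follows from \cite[Lemma IV.1]{9993009}: since $\chi$ is forward invariant and attractive, a constant trajectory (i.e., an equilibrium) would have to converge to the closed set $\chi$, which is possible only if it already lies in $\chi$. I do not expect a genuine obstacle here; the only point requiring care is confirming that the particular choices $w_{\max}=\max\{|c_1|,|c_2|\}/\min\{b_1,b_2\}$ and $x_{\max}=w_{\max}/\min\{a_1,a_2\}$ simultaneously close the bounds on both blocks of coordinates, which the computation above verifies.
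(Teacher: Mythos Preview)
Your proposal is correct and follows essentially the same route as the paper: Brouwer's fixed point theorem on the box $\chi$ for existence, with the self-map property verified via the bounds $\phi\in(0,1)$ and the definitions of $w_{\max}$, $x_{\max}$. The only minor difference is in the containment claim: the paper invokes forward invariance and attractivity of $\chi$ together with \cite[Lemma~4.1]{khalil1996nonlinear} to place all positive limit sets inside $\chi$, whereas your primary argument reuses the fixed-point relation $X^\ast=F(X^\ast)$ directly to bound the coordinates---a slightly more elementary and self-contained step that avoids appealing to the dynamics altogether.
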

\begin{proof}
The proof is a straightforward application of Brouwer's fixed point theorem, which states that for any continuous function 
that maps a nonempty compact convex set to itself, it has at least one equilibrium.

From the definition of $\chi$, one can check that $\chi$ is a closed hyperbox. The defined mapping $F(X)$ is continuous. Let $\tilde{X}=:(\tilde{x}_1, \tilde{x}_2,\tilde{w}_1,\tilde{w}_2)$ be an arbitrary point in $\chi$, one can check that $F(\tilde{X}) \in \chi$. It means $F(X)$ maps $\chi$ to itself.  Therefore, the existence of fixed points for $F(X)$ in $\chi$ follows immediately, which in turn implies the existence of equilibria for \eqref{minimalnetwork}. 
It is noted that $\chi$ is forward invariant and attractive for system \eqref{minimalnetwork}. It means that every solution of \eqref{minimalnetwork} is bounded to $\chi$. Then, according to \cite[Lemma 4.1]{khalil1996nonlinear}, it follows that the positive limit sets (including the equilibria) for system \eqref{minimalnetwork} is a subset of $\chi$. 
\end{proof}
The invariance and attraction of $\chi$ helps determine the location of equilibria, as it guarantees that all the positive limit sets, such as equilibria, limit cycles, etc., are contained in it. 
The set $\chi$ itself gives an estimate of the position of all possible equilibria of the system. However, this estimate can be vague, thus we are going to study the position and number of possible equilibria in depth.
Let $(x_1^*,x_2^*,w_1^*,w_2^*) \in \mathbb{R}^4$ denote an equilibrium of system \eqref{minimalnetwork}.
Based on \eqref{eq_equations1}, we can roughly judge the located region of all the possible equilibria. It is easy to check that $x_1^*$ and $w_1^*$ have the same sign with $c_1$, while $x_2^*$ and $w_2^*$ have the same sign with $c_2$. And we have the following observations.
\begin{enumerate}
    \item When $c_1>0$ and $c_2>0$, one has $x_1^*>0$, $x_2^*>0$, which leads to $w_1^*>0$, $w_2^*>0$. Therefore, the equilibria are located in the positive quadrant $\mathbb{R}^4_+$.
    \item When $c_1<0$ and $c_2<0$, one has $x_1^*<0$, $x_2^*<0$, which leads to $w_1^*<0$, $w_2^*<0$ similarly. Therefore, the equilibria are located in the negative quadrant $\mathbb{R}^4_-$.
   \item When $c_1$ and $c_2$ are of different signs,  $x_1^*$ and $w_1^*$ have different signs with $x_2^*$ and $w_2^*$. Therefore, the equilibria will be located in the other two quadrants accordingly.
\end{enumerate}

In general, a nonlinear neural system can have one or multiple equilibria under different parameter conditions. This is also true for system \eqref{minimalnetwork}. In the following, we will discuss the conditions that characterize the different cases.

One can calculate the Jacobian of $F$ at a point $X\in \chi$, namely
\begin{equation}
\begin{aligned}
    \jac{F}(X)&= \begin{bmatrix}
    0&\frac{w_2\phi'(x_2)}{a_1}&0&\frac{\phi(x_2)}{a_1}\\
    \frac{w_1\phi'(x_1)}{a_2}&0&\frac{\phi(x_1)}{a_2}&0\\
    \frac{c_1\phi'(x_1)\phi(x_2)}{b_1}&\frac{c_1\phi(x_1)\phi'(x_2)}{b_1}&0&0\\
    \frac{c_2\phi'(x_1)\phi(x_2)}{b_2}&\frac{c_2\phi(x_1)\phi'(x_2)}{b_2}&0&0
    \end{bmatrix}.
    \end{aligned}
\end{equation}
Let $\lvert \jac{F}(X)\rvert$ be the matrix norm of the Jacobian. We consider  the $1$-norm, i.e., 
\begin{equation}\label{1normJacobian1}
\begin{aligned}
    &\lvert \jac{F}(X)\rvert=\\
    &\max \left\{ 
     \left\lvert\frac{w_2\phi'(x_2)}{a_1} \right \rvert+ \left\lvert\frac{c_1\phi(x_1)\phi'(x_2)}{b_1} \right \rvert+ \left\lvert\frac{c_2\phi(x_1)\phi'(x_2)}{b_2} \right \rvert, \right.\\ 
    &\quad\quad~~\left. \left\lvert\frac{w_1\phi'(x_1)}{a_2} \right \rvert+ \left\lvert\frac{c_1\phi'(x_1)\phi(x_2)}{b_1} \right \rvert+ \left\lvert\frac{c_2\phi'(x_1)\phi(x_2)}{b_2} \right \rvert,\right.\\ &\quad\quad~~\left. \left\lvert\frac{\phi(x_1)}{a_2} \right \rvert, \left\lvert\frac{\phi(x_2)}{a_1} \right \rvert\right\}.
    \end{aligned}
\end{equation}
We have the following result on the existence of a unique equilibrium.  
\begin{lemma}\label{uniqueness}
When the conditions
\begin{equation}\label{conditionofunqueequilibrium}
  a_1,a_2>1,~\max \left\{\frac{\lvert w\rvert_{\max} }{a_1},\frac{\lvert w\rvert_{\max} }{a_2} \right\}+\frac{|c_1|}{b_1}+\frac{|c_2|}{b_2} <4.
\end{equation}
hold, system \eqref{minimalnetwork} has a unique equilibrium in $\chi$. 
\end{lemma}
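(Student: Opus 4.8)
The plan is to upgrade the existence result obtained via Brouwer to uniqueness by showing that the map $F$ is a \emph{contraction} on $\chi$ and then invoking the Banach fixed point theorem. Since $\chi$ is a closed hyperbox in $\mathbb{R}^4$, it is a complete metric space, and from the previous Lemma we already know $F(\chi)\subseteq\chi$. The remaining task is to establish a uniform bound $\sup_{X\in\chi}\lvert \jac{F}(X)\rvert<1$, where $\lvert\jac{F}(X)\rvert$ is the induced $1$-norm displayed in \eqref{1normJacobian1}. Because $\chi$ is convex, the mean value inequality applied along line segments in $\chi$ then yields $\lVert F(X)-F(Y)\rVert_1\le k\,\lVert X-Y\rVert_1$ with $k:=\sup_{X\in\chi}\lvert\jac{F}(X)\rvert<1$, so that $F$ is a contraction and its fixed point, equivalently the equilibrium of \eqref{minimalnetwork}, is unique in $\chi$.

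The core estimate is to bound each of the four column sums in \eqref{1normJacobian1} using the elementary properties of the sigmoid, namely $0<\phi(z)<1$ and $0<\phi'(z)=\phi(z)(1-\phi(z))\le 1/4$, together with the defining bounds $|w_1|,|w_2|\le w_{\rm{max}}$ valid on $\chi$. For the third and fourth terms one gets $\phi(x_1)/a_2<1/a_2$ and $\phi(x_2)/a_1<1/a_1$, and these are strictly below $1$ precisely when $a_1,a_2>1$, which is the first condition. For the first and second terms one factors out $\phi'(x_1)\le 1/4$, respectively $\phi'(x_2)\le 1/4$, and bounds the remaining factors by $|w_i|\le w_{\rm{max}}$ and $\phi<1$, obtaining the upper bounds $\tfrac14\bigl(\tfrac{w_{\rm{max}}}{a_2}+\tfrac{|c_1|}{b_1}+\tfrac{|c_2|}{b_2}\bigr)$ and $\tfrac14\bigl(\tfrac{w_{\rm{max}}}{a_1}+\tfrac{|c_1|}{b_1}+\tfrac{|c_2|}{b_2}\bigr)$. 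Each of these is strictly less than $1$ exactly when $\max\{w_{\rm{max}}/a_1,\,w_{\rm{max}}/a_2\}+|c_1|/b_1+|c_2|/b_2<4$, which is the second condition. Taking the maximum over all four columns shows that the two hypotheses in \eqref{conditionofunqueequilibrium} are precisely what force $\lvert\jac{F}(X)\rvert<1$ uniformly on $\chi$.

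I expect the computations themselves to be routine; the points requiring genuine care are twofold. First, one must justify the passage from a pointwise bound on the Jacobian norm to a \emph{global} Lipschitz (contraction) constant, which rests on the convexity of $\chi$ and the mean value inequality for vector-valued maps rather than a naive scalar mean value theorem. Second, one must confirm that the bounds are genuinely uniform over $\chi$, i.e., that the suprema $\phi<1$, $\phi'\le 1/4$, and $|w_i|\le w_{\rm{max}}$ are respected everywhere on $\chi$, so that the strict inequalities in \eqref{conditionofunqueequilibrium} translate into a strict contraction constant $k<1$. Once these are in place, the Banach fixed point theorem delivers uniqueness immediately, complementing the existence already guaranteed by the preceding Lemma.
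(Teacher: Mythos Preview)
Your proposal is correct and follows essentially the same route as the paper: bound the induced $1$-norm of $\jac{F}$ on $\chi$ via $\phi<1$, $\phi'\le 1/4$, and $|w_i|\le w_{\max}$, check that \eqref{conditionofunqueequilibrium} forces $\sup_{X\in\chi}\lvert\jac{F}(X)\rvert<1$, and conclude by Banach's fixed point theorem. If anything, you are slightly more careful than the paper in explicitly invoking convexity of $\chi$ and the vector-valued mean value inequality to pass from the pointwise Jacobian bound to a global Lipschitz constant; the paper simply asserts the contraction inequality.
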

\begin{proof}
Let $\sup_{X\in \chi} \lvert \jac{F}(X)\rvert $ denote the superior of the matrix norm of $\jac{F}$ in the set $\chi$.
It is noted that $\phi(\cdot)\in (0,1)$ and $\phi'(\cdot)=e^{-x}/(1+e^{-x})^2 \in (0,1/4)$. We have for all $X\in \chi$ that
\[\sup \left\lvert\frac{\phi(x_1)}{a_2} \right \rvert=\frac{1}{a_2},\]
\[ \sup \left\lvert\frac{\phi(x_2)}{a_1} \right \rvert=\frac{1}{a_1},\]
\begin{equation*}
    \begin{aligned}
    & \sup \left(\left\lvert\frac{w_1\phi'(x_1)}{a_2} \right \rvert+ \left\lvert\frac{c_1\phi'(x_1)\phi(x_2)}{b_1} \right \rvert+ \left\lvert\frac{c_2\phi'(x_1)\phi(x_2)}{b_2} \right \rvert \right) \\
    &=\frac{\lvert w_1\rvert_{\max} }{4a_2}+\left\lvert\frac{c_1}{4b_1}\right \rvert+\left\lvert\frac{c_2}{4b_2}\right \rvert,\\
    &\sup \left(\left\lvert\frac{w_2\phi'(x_2)}{a_1} \right \rvert+ \left\lvert\frac{c_1\phi(x_1)\phi'(x_2)}{b_1} \right \rvert+ \left\lvert\frac{c_2\phi(x_1)\phi'(x_2)}{b_2} \right \rvert \right) \\
    &=\frac{\lvert w_2\rvert_{\max} }{4a_1}+\left\lvert\frac{c_1}{4b_1}\right \rvert+\left\lvert\frac{c_2}{4b_2}\right \rvert.
    \end{aligned}
\end{equation*}
 If conditions \eqref{conditionofunqueequilibrium} hold, it is easy to check that  $\sup_{X \in \chi} \lvert \jac{F}(X)\rvert <1$. It then yields that for any $X_1, X_2\in \chi$
\[\lvert F(X_1)-F(X_2)\rvert <\sup_{X\in \chi} \lvert \jac{F}(X)\rvert \lvert X_1-X_2\rvert,\]
which implies that $F(X)$ is a contraction mapping in $\chi$. Hence, it follows that there exists a unique fixed point for the map $F(X)$ in the set $\chi$. Equivalently, according to the Banach fixed-point theorem, a unique equilibrium exists for system \eqref{minimalnetwork} on $\chi$.
\end{proof}
It is noted that in the above process, we used the $1$-norm for $DF$, and
 \eqref{conditionofunqueequilibrium} is sufficient conditions for the existence of a unique equilibrium.
 One can obtain different conditions by using other norms. 
The uniqueness of the equilibrium result can also be readily extended to the case with the presence of inputs, just like that in the standard Hopfield neural network model. We refer readers to references \cite{Guan2000,1000277,Wang2010}.

We are now turning to the scenario of multiple equilibria.
Due to many parameters in system \eqref{minimalnetwork} and the sigmoid functions, it is extremely difficult to characterize the conditions for multiple equilibria. Thus, we make some simplifications and study the possibility of the existence of multiple equilibria based on the equation \eqref{eq_equations2}.

Recall that $a_1$, $a_2$, $b_1$, $b_2>0$. According to the condition in Theorem \ref{uniqueness}, $a_1$ and $a_2$ have to be not greater than $1$ to have multiple equilibria. On the other hand, we can term $b_2a_1$ and $b_1a_2$ as a single parameter respectively, as they appear together in  \eqref{eq_equations2}.
Now, we assume the equalities with respect to the parameters hold, e.g., 
\begin{equation}\label{parasymmetry}
    b_2a_1=b_1a_2=A,~~c_1=c_2=c.
\end{equation}

Under \eqref{parasymmetry}, the condition \eqref{eq_equations1} turns to be
\begin{equation}\label{eqconditon1_1}
 \begin{aligned}
 &Ax_2=c\phi^2(x_1)\phi(x_2)\\
 &Ax_1=c\phi(x_1)\phi^2(x_2).
 \end{aligned} 
\end{equation}
It is easy to recognize that the above equations remain unchanged after the interchange of $x_1$ and $x_2$. It means that there is a reflection symmetry with respect to  the $x_1=x_2$ plane. Thus, if there is a solution, one can obtain another solution by simply swapping the values of the two variables. 

Let $(x_1^*,x_2^*)$ be a solution of \eqref{eqconditon1_1}.
We consider the specific case when $x_1^*=x_2^*$. Then, \eqref{eqconditon1_1} yields to
\[Ax_1=c\phi^3(x_1).\]
It is not difficult to check that the above equation has a unique root for all $c$, which is denoted by $\hat{x}$. Hence, there always exists a solution $(\hat{x},\hat{x})$ to \eqref{eqconditon1_1}.

Let $\alpha:=\frac{b_1a_2x_2}{c_1\phi(x_2)}$. By unfolding $x_1$ in \eqref{eqconditon1_1}, one can obtain 
\begin{equation}\label{eq_equations2}
A(\ln{\sqrt{\alpha}}-\ln{(1-\sqrt{\alpha})})=c\sqrt{\alpha}\phi^2(x_2).
\end{equation}

Based on \eqref{eq_equations2}, we can define the following function
\[\begin{aligned}
f(\xi)&:=A(\ln{\sqrt{\alpha}}-\ln{(1-\sqrt{\alpha})})-c\sqrt{\alpha}\phi^2(\xi)\\
&=\ln{\frac{\sqrt{\xi(1+e^{-\xi})/c}}{(1-\sqrt{\xi(1+e^{-\xi})/c})}}\\
&\quad -\frac{c\sqrt{\xi(1+e^{-\xi})/c}}{(1+e^{-\xi})^2}, ~\xi\in (\underline{\xi}, \overline{\xi}) \subset \mathbb{R}.
\end{aligned}\]
$f(\xi)$ is continuous and differentiable in the defined interval $\xi\in (\underline{\xi},\overline{\xi})$ which will be specified later. 
Without loss of generality, we consider $A=1$ and examine $f(\xi)$ in two cases, i.e., $c>0$ and  $c<0$. \\
\noindent
\textbf{Case I:} If $c>0$, one has $\xi\in (0,\beta_c)$ where $\beta_c$ is the root of the equation $\xi(1+e^{-\xi})=c$. And, it is easy to check that $f(\xi)\rightarrow -\infty$ as $\xi\rightarrow 0$, while $f(\xi)\rightarrow +\infty$ as $\xi\rightarrow \beta_c$. This implies that there is some $\xi^*$ such that $f(\xi^*)=0$. One can check that there only exists one $\xi^*$.
Correspondingly, \eqref{eq_equations1} has only one solution $(\xi^*,\xi^*)$ and the system \eqref{minimalnetwork} has a unique equilibrium $(\xi^*,\xi^*, \frac{a_2\xi^*}{\phi(\xi^*)}, \frac{a_1\xi^*}{\phi(\xi^*)})$. 

\noindent
\textbf{Case II:} If $c<0$, one has $\xi\in (-\beta_c,0)$. Similarly, one can check that $f(\xi)\rightarrow -\infty$ as $x_2\rightarrow 0$, while $f(\xi)\rightarrow +\infty$ as $\xi\rightarrow -\beta_c$, which implies there is at least one $\xi^*$ satisfying  $f(\xi)=0$. One can also check that only one $\xi^*$ exists when $-123.7215 \leq c<0$, while three such $\xi^*$ exist when $c<-123.7215$.
In the latter situation, \eqref{eq_equations1} has three solutions, i.e., $(\xi_1^*,\xi_1^*)$, $(\xi_2^*,\xi_3^*)$, and $(\xi_3^*,\xi_2^*)$, and the system \eqref{minimalnetwork} has three equilibria accordingly. 

We have shown that the system \eqref{minimalnetwork} has an odd number of equilibria, e.g., $1$ or $3$, in a specific case with the parametric symmetry. It is worth noting that the system can also have $2$ equilibria in the situation when the symmetry is broken as shown in Fig. \ref{fig:equilibria}.
\begin{figure}[htbp!]
    \centering
    \includegraphics[width=8.5cm]{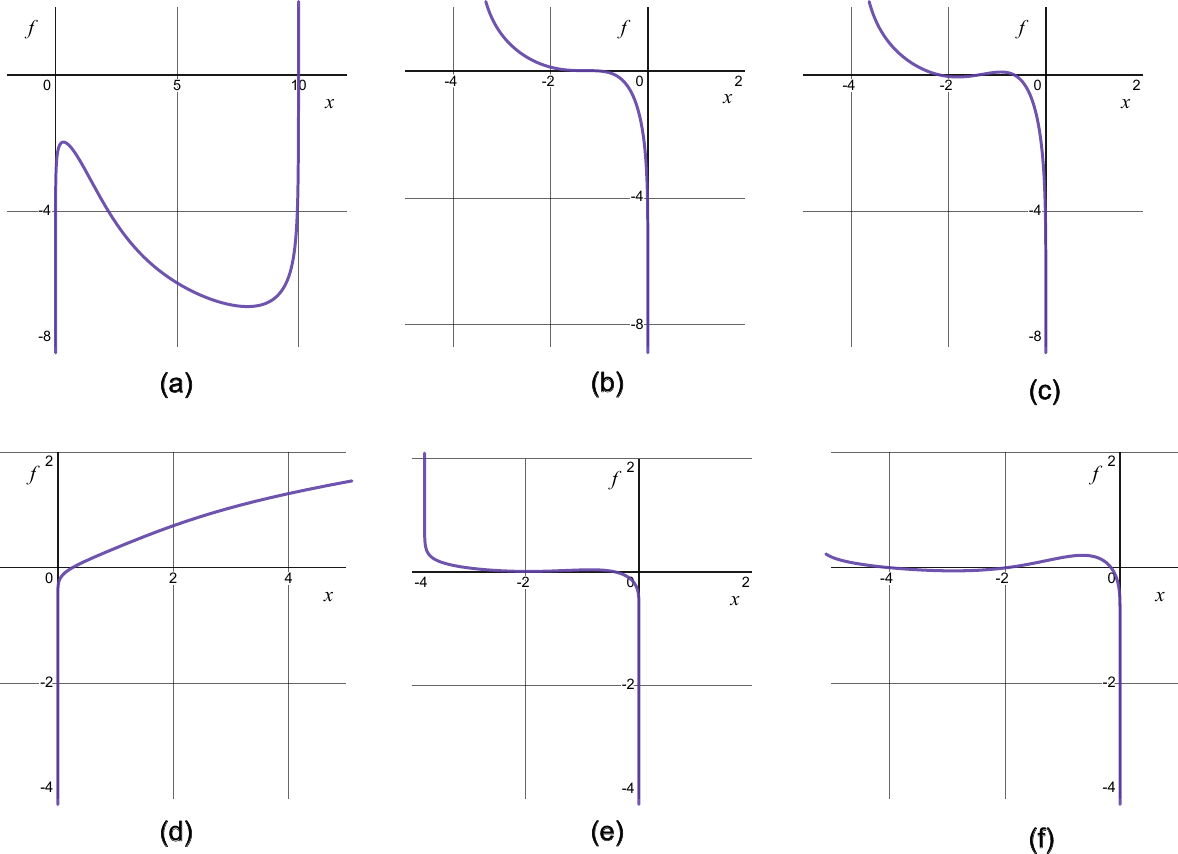}
    \caption{The solutions to $f(\xi)=0$ in different cases. }
    \label{fig:equilibria}
\end{figure}

In the above analysis, it has been noticed there is a critical value of $c\approx -123.7215$ such that the number of equilibria changes at that point, which typically implies a bifurcation occurring therein. We will investigate this emerging phenomenon formally in the following section. 

\subsection{Bifurcation Analysis}

Again, we start from system \eqref{minimalnetwork_onesynapse}. Denote the unique equilibrium by $(0,\tilde{x}_2, 2\tilde{x}_2)$, where $\tilde{x}_2$ is the solution to $4a_2b_1x_2=c_1\phi(x_2)$ and $2\tilde{x}_2$ comes from substituting $\tilde{x}_2$ to obtain $w_1$.
One can calculate the Jacobian of \eqref{minimalnetwork_onesynapse} at $(0,\tilde{x}_2, 2\tilde{x}_2)$, i.e.,
\begin{equation}\label{jacobian_onesynapse}
  \tilde{J}=\begin{bmatrix}
       -a_1&0&0\\
       \tilde{x}_2/2&-a_2&1/2\\
c_1\tilde{\phi}_2/4&c_1\tilde{\phi}_2'/2&-b_1
   \end{bmatrix},
\end{equation}
where $\tilde{\phi}_2=\phi(\tilde{x}_2)$ and $\tilde{\phi}_2'$ is the derivative at that point.
It is easy to check that
\[\det(\tilde{J})=\frac{a_1c_1\tilde{\phi}_2'}{4} - a_1a_2b_1=-a_1a_2b_1[1-\tilde{x}_2(1-\tilde{\phi}_2)]<0,\]
where we have used $\tilde{\phi}_2'=\tilde{\phi}_2(1-\tilde{\phi}_2)$ and $4a_2b_1\tilde{x_2}=c_1\phi(\tilde{x_2})$. The always negative determinant implies that the equilibrium does not change stability as the parameter changes. And from the eigenvalues, we further know that the equilibrium is always exponentially stable because 
\begin{equation}
    \begin{aligned}
        &\lambda_1=-a_1<0,\\
&\lambda_{2,3}=-\frac{a_2+b_1}{2} \pm \frac{((a_2-b_1)^2+c_1\tilde{\phi}_2')^{1/2}}{2}<0. 
\end{aligned}
\end{equation}
Thus, we have the following statement.
\begin{theorem}
    The unique equilibrium $(0,\tilde{x}_2, 2\tilde{x}_2)$ of system \eqref{minimalnetwork_onesynapse} is always exponentially stable.
\end{theorem}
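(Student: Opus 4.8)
The plan is to prove the equilibrium is hyperbolic with a Hurwitz Jacobian, so that exponential stability follows from the principle of linearized stability. I would begin by exploiting the block structure of $\tilde{J}$ in \eqref{jacobian_onesynapse}: because the $x_1$-dynamics decouple, the first row is $(-a_1,0,0)$ and the characteristic polynomial factors as $(\lambda+a_1)$ times the characteristic polynomial of the lower-right $2\times2$ block in $(x_2,w_1)$. This immediately gives $\lambda_1=-a_1<0$ and reduces the problem to the quadratic $\lambda^2+(a_2+b_1)\lambda+(a_2b_1-c_1\tilde{\phi}_2'/4)=0$.

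For a real quadratic with positive leading coefficient, both roots lie in the open left half-plane precisely when the linear and constant coefficients are positive (the two-dimensional Routh--Hurwitz criterion). The linear coefficient $a_2+b_1>0$ is immediate, so everything hinges on the sign of the constant term $\lambda_2\lambda_3=a_2b_1-c_1\tilde{\phi}_2'/4$. Instead of analyzing this product directly, I would note that $\det(\tilde{J})=\lambda_1\lambda_2\lambda_3=-a_1(a_2b_1-c_1\tilde{\phi}_2'/4)$, so positivity of the product is equivalent to $\det(\tilde{J})<0$, exactly the determinant already computed above. This single scalar inequality simultaneously covers the complex-conjugate case (negative discriminant, common real part $-(a_2+b_1)/2<0$) and the real-root case, for either sign of $c_1$.

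The main obstacle is thus to show $\det(\tilde{J})<0$ unconditionally. Using $\tilde{\phi}_2'=\tilde{\phi}_2(1-\tilde{\phi}_2)$ and the equilibrium identity $4a_2b_1\tilde{x}_2=c_1\tilde{\phi}_2$ to eliminate $c_1$, the determinant reduces to $-a_1a_2b_1[\,1-\tilde{x}_2(1-\tilde{\phi}_2)\,]$, so it remains to prove $\tilde{x}_2(1-\tilde{\phi}_2)<1$. I would settle this through the scalar function $g(x):=x(1-\phi(x))=x/(1+e^x)$: for $x\le0$ one has $g(x)\le0<1$, while for $x>0$ the elementary bound $x<e^x<1+e^x$ gives $g(x)<1$. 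Since this holds for every real argument, it holds at $x=\tilde{x}_2$, the bracket is positive, and $\det(\tilde{J})<0$ follows. Combining $\lambda_1<0$ with the negative sum and positive product of the remaining pair, all three eigenvalues have negative real part, so $\tilde{J}$ is Hurwitz and the equilibrium is exponentially stable. The only nontrivial ingredient is the uniform bound $x(1-\phi(x))<1$; the rest is bookkeeping around block-triangularity and the two-dimensional Routh--Hurwitz criterion.
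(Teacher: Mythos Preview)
Your proof is correct and follows essentially the same route as the paper: both exploit the block-triangular structure of $\tilde{J}$ to split off $\lambda_1=-a_1$, both reduce the sign analysis of the remaining $2\times 2$ block to the determinant identity $\det(\tilde{J})=-a_1a_2b_1[\,1-\tilde{x}_2(1-\tilde{\phi}_2)\,]$ via the equilibrium relation $4a_2b_1\tilde{x}_2=c_1\tilde{\phi}_2$ and $\tilde{\phi}_2'=\tilde{\phi}_2(1-\tilde{\phi}_2)$, and both conclude from $\tilde{x}_2(1-\tilde{\phi}_2)<1$. The only differences are cosmetic: the paper writes the remaining eigenvalues explicitly as $\lambda_{2,3}=-\tfrac{a_2+b_1}{2}\pm\tfrac12\sqrt{(a_2-b_1)^2+c_1\tilde{\phi}_2'}$, whereas you invoke the two-dimensional Routh--Hurwitz test; and you supply the one-line justification of $x(1-\phi(x))<1$ via $x<1+e^x$, which the paper simply asserts.
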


Next, we turn to the system \eqref{minimalnetwork}. We will focus on the specific parameter condition where 
\begin{equation}\label{specificcase_para}
  a_1=a_2=b_1=b_2=1,~~c_1=c_2=c\in \mathbb{R}.  
\end{equation}
The following results are rigorously proved for system \eqref{minimalnetwork} under this specific parameter condition. However, for other parameter conditions, similar results are also observed as shown numerically later.

Under \eqref{specificcase_para}, in the absence of inputs, system \eqref{minimalnetwork} becomes
\begin{equation*}
 \begin{aligned}
 &\dot{x}_1=-x_1+w_2\phi(x_2)\\
 &\dot{x}_2=-x_2+w_1\phi(x_1)\\
 &\dot{w}_{1}=-w_{1}+c\phi(x_1)\phi(x_2)\\
 &\dot{w}_{2}=-w_{2}+c\phi(x_1)\phi(x_2).
 \end{aligned} 
\end{equation*}
It is noticed that the vector fields of $\dot{w}_1$and $\dot{w}_2$ are the same, which implies that $w_1(t)=w_2(t)$ as $t\rightarrow \infty$. Thus, we can reduce the system to the  3-dimensional subsystem
\begin{equation}\label{minimalnetwork1}
 \begin{aligned}
 &\dot{x}_1=-x_1+w\phi(x_2)\\
 &\dot{x}_2=-x_2+w\phi(x_1)\\
 &\dot{w}=-w+c\phi(x_1)\phi(x_2).
 \end{aligned} 
\end{equation}
Denote the vector field of \eqref{minimalnetwork1} by 
    \[G(x_1,x_2,w)=\begin{pmatrix}
      &-x_1+w\phi(x_2)\\
      &-x_2+w\phi(x_1)\\
      &-w+c\phi(x_1)\phi(x_2)
    \end{pmatrix}.\]
Define a map  $S: \mathbb{R}^3\rightarrow \mathbb{R}^3$ by $ S(x_1,x_2,w)=(x_2,x_1,w)$.
A simple computation shows that 
\[\dot{S}(x_1,x_2,w)=(G\circ S)(x_1,x_2,w),\]
where the symbol $\circ$ is used to denote function composition.
This equability implies that $S(x_1,x_2,w)$ is a $\mathbb{Z}_2$ symmetry \cite{kuznetsov1998elements} in \eqref{minimalnetwork1}. 

Then, system \eqref{minimalnetwork1} can be called $\mathbb{Z}_2$-equivariant with respect to $S$. This property has important consequences on the dynamics of \eqref{minimalnetwork1}.
If $(x_1^*, x_2^*, w^*)$ is an equilibrium  of the system, then $(x_2^*, x_1^*, w^*)$ is also an
equilibrium  of the system. This is consistent with the obtained result in Section III.A.
Moreover, the two equilibria have the same type of stability \cite{kuznetsov1998elements}. 
We have shown that there is always an equilibrium in \eqref{minimalnetwork1}, which is located on the plane $L:= \{(x_1,x_2,w)\in \mathbb{R}^3: 
x_1=x_2$\}. This plane is invariant under the flow of the system, and it divides the whole space $\mathbb{R}^3$ into two subsets such that trajectories originating from a specific subset will remain entirely within that subset.

We denote the equilibrium on the plane $L$ by $\hat{X}:=(\hat{x},\hat{x},c\phi(\hat{x})^2)$, where $\hat{x}$ is the unique solution to $x=c\phi(x)^3, c\in \mathbb{R}$.  This equilibrium is fixed as it remains the same under the symmetry.
Let $\hat{J}$ be the Jacobian matrix of system \eqref{minimalnetwork1} at  $\hat{X}$. We have that 
\begin{equation}\label{jacobian}
   \hat{J}=\begin{bmatrix}
       -1&c\hat{\phi}^2\hat{\phi}'&\hat{\phi}\\
       c\hat{\phi}^2\hat{\phi}'&-1&\phi\\
c\hat{\phi}'\hat{\phi}&c\hat{\phi}'\hat{\phi}&-1
   \end{bmatrix}, 
\end{equation}
where $\hat{\phi}=\phi(\hat{x})$ and $\hat{\phi}'$ is the derivative at that point. We know that for the sigmoid function, it admits $\hat{\phi}'=\hat{\phi}(1-\hat{\phi})$. Substituting this into $\hat{J}$ yields 
\begin{equation*}
   \hat{J}=\begin{bmatrix}
       -1&c\hat{\phi}^3(1-\hat{\phi})&\hat{\phi}\\
       c\hat{\phi}^3(1-\hat{\phi})&-1&\hat{\phi}\\
c\hat{\phi}^2(1-\hat{\phi})&c\hat{\phi}^2(1-\hat{\phi})&-1
   \end{bmatrix}.
\end{equation*}
The characteristic polynomial of this matrix is 
\[\det(\hat{J}-\lambda I)=\lambda^3+3\lambda^2+B\lambda-\det(\hat{J}),\]
where 
\[B=-\hat{\phi}^8 + 2c^2\hat{\phi}^7 - c^2\hat{\phi}^6 + 2c\hat{\phi}^4 - 2c\hat{\phi}^3 + 3,\]
and $\det(\hat{J})$ is the determinant of $\hat{J}$, i.e.,
\[\det(\hat{J})=3(c\hat{\phi}^4 - c\hat{\phi}^3 +1)( c\hat{\phi}^4 - c\hat{\phi}^3  - 1)
.\]
It is noticed that the above polynomial can be factorized as
\[(\lambda-c\hat{\phi}^4 + c\hat{\phi}^3 + 1)[\lambda^2+(2+c\hat{\phi}^4-c\hat{\phi}^3)\lambda-3(c\hat{\phi}^4-c\hat{\phi}^3+1)].\]
Then, we obtain the eigenvalues of $\hat{J}$, i.e.,
\begin{equation}
    \begin{aligned}
        &\lambda_1=c\hat{\phi}^4 - c\hat{\phi}^3 - 1,\\
&\lambda_{2,3}=\frac{c\hat{\phi}^3 - c\hat{\phi}^4 - 2}{2}\pm \frac{((c\hat{\phi}^3-c\hat{\phi}^4)(- c\hat{\phi}^4 + c\hat{\phi}^3 + 8))^{1/2}}{2}. 
\end{aligned}
\end{equation}
By analyzing the eigenvalues, we have the following result. 
\begin{lemma}\label{eigenvalue_stability}
 There exists a critical value of $c_0<0$ such that the equilibrium $\hat{X}$ is asymptotically stable when $c>c_0$ and unstable when $c<c_0$.   
\end{lemma}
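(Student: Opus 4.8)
The plan is to read off local asymptotic stability of $\hat{X}$ from the signs of the real parts of the three eigenvalues $\lambda_1,\lambda_{2,3}$ computed just above, tracking their dependence on $c$. The key reduction is to introduce the scalar $\gamma:=c\hat{\phi}^3(1-\hat{\phi})$; since $\hat{x}=c\hat{\phi}^3$ and $\hat{\phi}=\phi(\hat{x})$, this is also $\gamma=\hat{x}(1-\phi(\hat{x}))$. In this variable the eigenvalues become $\lambda_1=-\gamma-1$ and $\lambda_{2,3}=\tfrac{1}{2}(\gamma-2)\pm\tfrac{1}{2}\sqrt{\gamma(\gamma+8)}$, from which I read off directly $\lambda_2+\lambda_3=\gamma-2$ and $\lambda_2\lambda_3=\tfrac{1}{4}[(\gamma-2)^2-\gamma(\gamma+8)]=1-3\gamma$. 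Because $\hat{\phi}\in(0,1)$, the sign of $\gamma$ matches that of $\hat{x}$, which matches that of $c$; in particular $\gamma\le 0$ whenever $c\le 0$.

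First I would dispose of the pair $\lambda_{2,3}$ by showing they have negative real part for every $c$. By the two-dimensional trace--determinant criterion this amounts to $\lambda_2+\lambda_3=\gamma-2<0$ together with $\lambda_2\lambda_3=1-3\gamma>0$, i.e.\ to the single bound $\gamma<1/3$ (the trace condition then being automatic). For $c\le 0$ this is immediate since $\gamma\le 0$. For $c>0$ I would maximize $h(\hat{x}):=\hat{x}(1-\phi(\hat{x}))$ over $\hat{x}>0$: from $h'(\hat{x})=(1-\phi)(1-\hat{x}\phi)$ the unique critical point $\hat{x}^*$ solves $\hat{x}^*\phi(\hat{x}^*)=1$, where $h(\hat{x}^*)=\hat{x}^*-1$. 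Since $\hat{x}\phi(\hat{x})$ is strictly increasing on $(0,\infty)$ and already exceeds $1$ at $\hat{x}=4/3$, we get $\hat{x}^*<4/3$ and hence $\gamma\le h(\hat{x}^*)<1/3$. Thus $\lambda_{2,3}$ never cross the imaginary axis, and the stability of $\hat{X}$ is governed entirely by $\lambda_1=-\gamma-1$, which is negative precisely when $\gamma>-1$.

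Next I would pin down the threshold via the monotone dependence of $\gamma$ on $c$. Differentiating the defining relation $\hat{x}=c\phi(\hat{x})^3$ implicitly yields $d\hat{x}/dc=\phi^3/(1-3\gamma)>0$ (using $\gamma<1/3$), so $\hat{x}$ is strictly increasing in $c$; and on the branch $\hat{x}<0$ one has $h'(\hat{x})=(1-\phi)(1-\hat{x}\phi)>0$, so $\gamma=h(\hat{x})$ is strictly increasing in $c$ for $c<0$. As $c\to 0^-$ we have $\gamma\to 0$, while as $c\to-\infty$ we have $\hat{x}\to-\infty$ and $\gamma=\hat{x}(1-\phi)\to-\infty$. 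Hence $\gamma$ is a strictly increasing bijection from $(-\infty,0)$ onto $(-\infty,0)$, the equation $\gamma=-1$ has a unique root $c_0<0$, and $\gamma>-1\iff c>c_0$ (the inequality being automatic for $c\ge 0$, where $\gamma\ge 0$).

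Combining the two parts finishes the argument: for $c>c_0$ all of $\lambda_1,\lambda_2,\lambda_3$ have negative real part, so $\hat{X}$ is asymptotically stable, whereas for $c<c_0$ we have $\lambda_1=-\gamma-1>0$ and $\hat{X}$ is unstable. This zero crossing of the transverse eigenvalue (with eigenvector $(1,-1,0)$, normal to the symmetry plane $L$) is exactly the pitchfork anticipated in Section III.A. The main obstacle is that $\hat{x}$, and therefore $\gamma$, is defined only implicitly through $\hat{x}=c\phi(\hat{x})^3$, so the entire argument must control $\gamma$ without a closed form; concretely, everything rests on the global bound $\gamma<1/3$ (ruling out any destabilization via $\lambda_{2,3}$) and on the strict monotonicity of $\gamma$ in $c$ (guaranteeing a single threshold $c_0$). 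Both reduce to elementary but careful monovariate sigmoid estimates, which is where I would concentrate the effort.
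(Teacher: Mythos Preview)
Your argument is correct and follows the same overall strategy as the paper: both proofs track the three eigenvalues of the Jacobian at $\hat X$ as functions of $c$, show that $\lambda_{2,3}$ always have negative real part, and then locate a unique sign change of $\lambda_1$.

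The execution differs in two respects worth noting. First, your introduction of the scalar $\gamma=c\hat\phi^3(1-\hat\phi)=\hat x(1-\phi(\hat x))$ packages the whole spectrum into one variable and lets you handle $\lambda_{2,3}$ via the two--by--two trace--determinant criterion ($\lambda_2+\lambda_3=\gamma-2$, $\lambda_2\lambda_3=1-3\gamma$), reducing everything to the single global bound $\gamma<1/3$; the paper instead rewrites $\lambda_{2,3}$ in terms of $\lambda_1$ and argues by cases on the range of $\lambda_1$. Your route is tidier and in fact closes a small gap in the paper's case analysis (the case $\lambda_1<-1$, which occurs for every $c>0$, is not treated explicitly there). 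Second, the paper pins down $c_0$ exactly by solving $\hat x(\hat\phi-1)=1$ via the Lambert $W$ function, obtaining $\hat x_0=-W_0(1/e)-1$ and hence the numerical value $c_0\approx-123.7215$; you instead establish existence and uniqueness of $c_0$ through monotonicity of $c\mapsto\gamma$ (using $d\hat x/dc=\phi^3/(1-3\gamma)>0$ and $h'>0$ on $\hat x<0$), which is cleaner but does not deliver a closed form. Either path proves the lemma; yours trades the explicit constant for a more self--contained monotonicity argument.
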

\begin{proof}
    The proof relies on carefully examining the eigenvalues of the Jacobian. See Appendix \ref{proofoflemma_stability} for details.
\end{proof}

From Section \ref{equilibria_characterization},
we have shown that the number of equilibria of system \eqref{minimalnetwork1} changes, and the stability of the equilibrium $\hat{X}$ also changes at $c=c_0\approx -123.7215$ as stated above. These two values are consistent. Now, it is ready to classify the type of the bifurcation formally.
\begin{theorem}
    System \eqref{minimalnetwork1} undergoes a pitchfork bifurcation with respect to the equilibrium $\hat{X}$  at $c=c_0$.
\end{theorem}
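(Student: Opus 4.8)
The plan is to classify the bifurcation by reducing the full three-dimensional flow \eqref{minimalnetwork1} to its one-dimensional center manifold at $c=c_0$ and exhibiting the resulting normal form. First I would record the spectral picture at criticality: by Lemma \ref{eigenvalue_stability}, at $c=c_0$ the Jacobian $\hat{J}$ has a simple zero eigenvalue, and from the explicit expressions for $\lambda_{2,3}$ one checks that these retain strictly negative real parts there. Thus the center subspace is one-dimensional and the complementary two directions are hyperbolic (stable), so the center manifold theorem applies and yields a scalar reduced equation $\dot{u}=g(u,c)$ governing the local dynamics near $\hat{X}$.

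The crucial structural input is the $\mathbb{Z}_2$-equivariance with respect to $S$. A direct check shows that the center eigenvector attached to the vanishing eigenvalue is $v=(1,-1,0)^\top$, which is precisely the \emph{anti-symmetric} direction $Sv=-v$, transverse to the invariant plane $L=\{x_1=x_2\}$, while the two stable eigenvectors span the symmetric (fixed) subspace of $S$. Consequently the equivariant version of the center manifold theorem guarantees that the reduction commutes with the induced action $u\mapsto -u$, so that $g(\cdot,c)$ is an odd function of $u$. All even-order terms, and in particular the quadratic term, therefore vanish identically, and the reduced dynamics take the pitchfork form $\dot{u}=\mu(c)\,u+\kappa\,u^3+O(u^5)$ with $\mu(c_0)=0$.

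It then remains to verify the two non-degeneracy conditions. For transversality I would show $\mu'(c_0)\neq0$, which is equivalent to $\lambda_1'(c_0)\neq0$; this follows by differentiating $\lambda_1=c\hat{\phi}^4-c\hat{\phi}^3-1$ while using the implicit relation $\hat{x}=c\phi(\hat{x})^3$ to compute $d\hat{x}/dc$, and checking that the outcome is nonzero at $c_0$. For the cubic condition I would compute $\kappa$ from the standard center-manifold projection, expanding $G$ to third order at $\hat{X}$ (which brings in $\phi'$, $\phi''$, $\phi'''$ of the sigmoid) and incorporating the quadratic correction $h(u)=h_2 u^2+O(u^4)$ of the center manifold, which is even in $u$ by symmetry and routes an additional contribution into the cubic coefficient. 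Establishing $\kappa\neq0$ at $c_0$ completes the identification of a pitchfork via the pitchfork bifurcation theorem, and the sign of $\kappa$ relative to $\mu'$ labels it sub- or supercritical. As an independent consistency check, the equilibrium count of Section \ref{equilibria_characterization}, namely a single root for $c>c_0$ versus exactly three for $c$ just below $c_0$, is compatible only with a nonzero leading cubic and thus corroborates $\kappa\neq0$.

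The main obstacle is the cubic coefficient. Although the $\mathbb{Z}_2$-symmetry removes the quadratic term and streamlines the bookkeeping, evaluating $\kappa$ still requires assembling the genuine cubic terms of $G$ along $v$ together with the indirect contribution transmitted through the quadratic part $h_2 u^2$ of the center manifold, and then showing that this sum does not accidentally vanish at the specific value $c=c_0\approx-123.7215$. This is the delicate, computation-heavy step; the spectral and symmetry arguments above are comparatively routine.
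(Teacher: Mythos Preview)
Your approach is correct and is essentially the same as the paper's: both exploit the $\mathbb{Z}_2$-equivariance under $S(x_1,x_2,w)=(x_2,x_1,w)$, identify the center direction $v=(1,-1,0)^\top$ as anti-symmetric (transverse to the fixed plane $L$), and reduce to a one-dimensional odd normal form on the center manifold. The difference is only in packaging. The paper invokes \cite[Theorem~7.7]{kuznetsov1998elements} as a black box---once the symmetry, the simple zero eigenvalue, and the fact that the null eigenvector lies outside $\operatorname{Fix}(S)$ are checked, that theorem delivers the pitchfork normal form $\dot{y}=\beta y\pm y^3$ directly, and the paper dismisses the genericity (transversality and cubic nondegeneracy) conditions in a single sentence. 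You instead propose to unpack that theorem by hand: computing $\mu'(c_0)$ via implicit differentiation of $\hat{x}=c\phi(\hat{x})^3$, and assembling the cubic coefficient $\kappa$ from the third-order expansion of $G$ together with the quadratic graph of the center manifold. Your route is more laborious but more transparent, and your honest flagging of the $\kappa\neq0$ check as the delicate step is in fact more careful than the paper, which does not actually verify it; your appeal to the equilibrium count of Section~\ref{equilibria_characterization} as an indirect confirmation of cubic nondegeneracy is a reasonable substitute for the explicit computation.
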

\begin{proof}
   It is already known that the system \eqref{minimalnetwork1} is $S$-symmetric. The equilibrium $\hat{X}$ has a simple eigenvalue $\lambda_1=0$ at $c=c_0$. And one can calculate the corresponding eigenvector is $v_1=[-1,1,0]^\top$. Thus, this eigenvector does not belong to the subspace defined by the plane $x_1=x_2$. 

   In this case, the conditions of \cite[Theorem 7.7]{kuznetsov1998elements} (p. 281) are satisfied, and it results in that the system \eqref{minimalnetwork1} has a one-dimensional $S$-invariant center manifold for all $c$ in the vicinity of $c=c_0$. According to \cite{kuznetsov1998elements}, the restriction of the system to this center manifold is locally
topologically equivalent near the equilibrium to the following normal form 
\[\dot{y}=\beta y\pm y^3, ~~y\in \mathbb{R}.\]
The genericity conditions are satisfied trivially. Therefore, one can conclude that a pitchfork bifurcation of $\hat{X}$ happens at $c=c_0$.
\end{proof}
The identified pitchfork bifurcation is \emph{supercritical} as it results in the appearance of two $S$-conjugate equilibria for $c<c_0$, which are both stable, as shown in Fig. \ref{fig:exaplesimulation}D. 
With the result of pitchfork bifurcation,  we can describe the local dynamics of system \eqref{minimalnetwork1}: when $c<c_0$, the system has three equilibria, among which two are stable and one is unstable; the unstable equilibrium becomes stable and the two stable equilibria disappear as $c$  passes $c_0$. 

The results obtained in this section are proved for the specific symmetric case \eqref{specificcase_para}. However, we note that all these findings can also exist in more general cases. The pitchfork bifurcation will remain for other parameter conditions when the symmetry in \eqref{specificcase_para} is preserved. Moreover, breaking the symmetry can result in the unfolding of pitchfork bifurcation, so-called the \emph{imperfect pitchfork bifurcation} (as shown in Fig. \ref{fig:exaplesimulation}(b)).  The extensions to encompass broader cases, as well as the results for non-symmetric cases, will depend on the sophisticated application of singular bifurcation theory \cite{golubitsky2012singularities}, and we leave them to future study.

\begin{figure*}[htbp!]
    \centering
    \includegraphics[width=14.5cm]{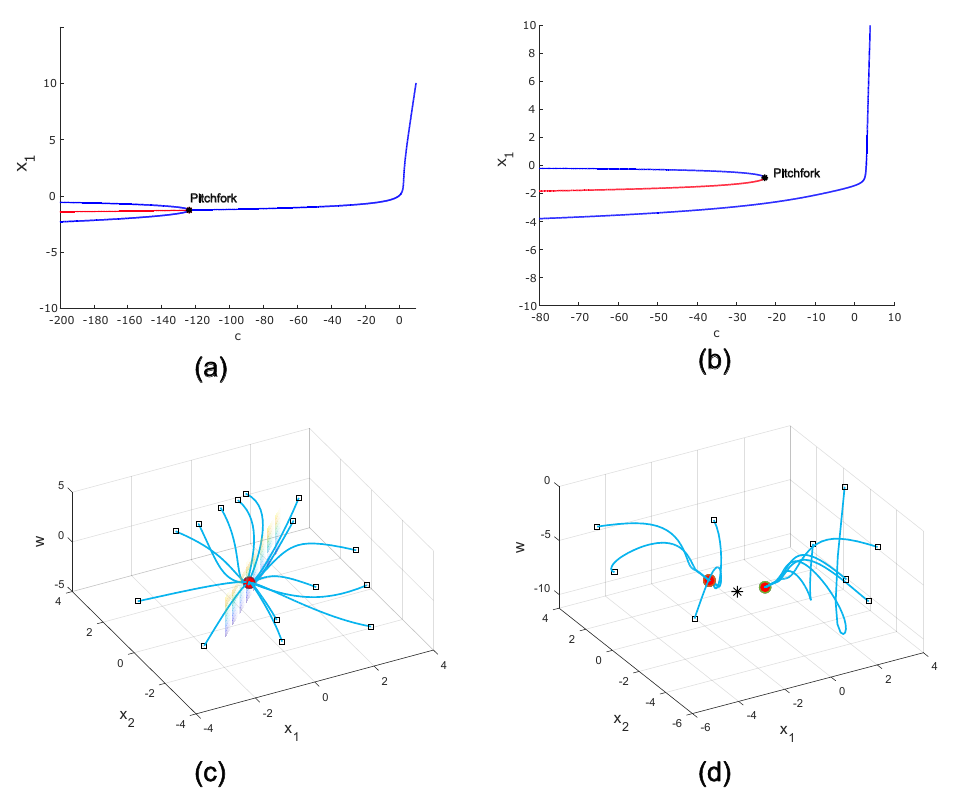}
    \caption{Bifurcations and system dynamics. (a) shows the pitchfork bifurcation in the symmetric parameters case, where $a_1=1$, $a_2=1$, $b_1=1$.  (b) shows the pitchfork bifurcation in the asymmetric parameters case, where $a_1=0.2$, $a_2=0.4$, $b_1=0.25$ and $b_2=0.5$. The blue lines represent the stable equilibria and the red lines are the unstable equilibria. 
    (c) depicts the dynamics of \eqref{minimalnetwork1} with a uniquely stable equilibrium (red dot) when $c=-3$, while (d) depicts the dynamics \eqref{minimalnetwork1} with three equilibria when $c=-150$, of which two are stable (red dots) and one is unstable (black asterisk). The blue curves are trajectories starting from random initial points (squares).  }
    \label{fig:exaplesimulation}
\end{figure*}

Although there is only one equilibrium when $c>c_0$, whether it is globally stable or not in \eqref{minimalnetwork1} remains unknown. We are now going to prove it is indeed globally stable for some certain $c$. 
We have already known that the set $\chi$ is forward invariant and attractive for the general system \eqref{minimalnetwork}. Under the condition \eqref{specificcase_para}, this set turns to be $\chi_1:= \{(x_1,x_2,w)\in \mathbb{R}^3: 
|x_1|,|x_2|,|w|\leq |c|$\}. As all trajectories are ultimately bounded in $\chi_1$, it suffices to consider the dynamics \eqref{minimalnetwork1} restricted in $\chi_1$.

\begin{theorem}\label{theorem_globalstable}
The following statements hold for the dynamics \eqref{minimalnetwork1} in $\chi_1$:
\begin{enumerate}
    \item The plane $L$ is globally asymptotically stable for each $c>-16$;
\item The equilibrium $\hat{X}$ is globally  asymptotically stable for $c>-16$.
\end{enumerate}

\end{theorem}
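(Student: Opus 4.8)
The plan is to prove the two statements in order: first establish global attraction to the invariant plane $L$, then analyze the reduced planar flow and lift the conclusion back to $\mathbb{R}^3$.

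For statement (1) I would introduce the transverse coordinate $\delta := x_1 - x_2$ together with the candidate Lyapunov function $V := \tfrac12\delta^2$. Since $\dot{x}_1 - \dot{x}_2 = -\delta + w\big(\phi(x_2) - \phi(x_1)\big)$, and the mean value theorem gives $\phi(x_1) - \phi(x_2) = \phi'(\zeta)\delta$ for some $\zeta$ between $x_1$ and $x_2$ with $\phi'(\zeta)\in(0,1/4]$, I obtain
\[\dot V = -\big(1 + w\phi'(\zeta)\big)\delta^2.\]
Hence $L = \{\delta = 0\}$ is globally attracting as soon as $1 + w\phi'(\zeta) > 0$ on the region that contains all limit sets. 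The crude bound $|w| \le |c|$ coming from $\chi_1$ is not sharp enough to reach the threshold $-16$, so the crux is to tighten the ultimate bound on $w$.

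The main quantitative step, and the one I expect to be the chief obstacle, is a bootstrapped refinement of the invariant set in the critical anti-Hebbian regime $c < 0$ (for $c \ge 0$ one has $w\phi'(\zeta)\ge 0$ and the claim is immediate). I would proceed in three nested stages: (i) from $\dot w = -w + c\phi(x_1)\phi(x_2)$ with $\phi(x_1)\phi(x_2)\in(0,1)$, the interval $w\in[c,0]$ is forward invariant and attracting; (ii) once $w\le 0$, the sign of the forcing in $\dot{x}_i = -x_i + w\phi(x_j)$ drives $x_1,x_2\le 0$ eventually; (iii) once $x_1,x_2\le 0$ we have $\phi(x_i)\le 1/2$, so $\phi(x_1)\phi(x_2)\le 1/4$, which sharpens the $w$-forcing to $[c/4,0]$ and produces the forward-invariant, attracting set $\Omega := \{x_1,x_2\le 0,\ w\in[c/4,0]\}$. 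On $\Omega$ one has $w\phi'(\zeta)\ge (c/4)(1/4) = c/16$, so $1 + w\phi'(\zeta)\ge 1 + c/16 > 0$ exactly when $c > -16$, yielding $\dot V \le -(1+c/16)\delta^2 < 0$ and hence global asymptotic (indeed exponential) stability of $L$. The delicacy lies in verifying each stage is genuinely forward invariant by checking the sign of the vector field on the relevant boundary, and in tracking the two factors of $1/4$ so as to land precisely on $-16$.

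For statement (2) I would use statement (1) to restrict to the flow on the invariant plane $L$, where the system collapses to the planar system $\dot x = -x + w\phi(x)$, $\dot w = -w + c\phi(x)^2$. By the equilibrium analysis of Section \ref{equilibria_characterization} this planar system has the unique equilibrium $\hat X$, which is locally asymptotically stable since $-16 > c_0$, and all trajectories are bounded in $\chi_1 \cap L$. To upgrade local to global stability I would combine the Poincar\'e--Bendixson theorem with the Bendixson--Dulac criterion: the divergence of the planar field equals $-2 + w\phi'(x)$, which on $\Omega\cap L$ (where $w\le 0$) is at most $-2 < 0$, ruling out periodic orbits; a bounded, cycle-free planar flow with a single, locally stable equilibrium forces $\hat X$ to be globally asymptotically stable within $L$. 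Finally I would lift this to $\mathbb{R}^3$: by statement (1) the $\omega$-limit set of any trajectory lies in the invariant plane $L$, so it must coincide with $\hat X$ by the planar analysis, and the local asymptotic stability of $\hat X$ in $\mathbb{R}^3$ (Lemma \ref{eigenvalue_stability}) closes the argument. The secondary obstacle is making this reduction rigorous, since attraction to an invariant manifold together with convergence inside it does not by itself guarantee convergence in the ambient space; I would settle this through the limit-set argument above combined with the established local stability of $\hat X$.
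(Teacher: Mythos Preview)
Your approach mirrors the paper's exactly: the same Lyapunov function $V=\tfrac12(x_1-x_2)^2$, the same three-stage bootstrap ($w\le 0 \Rightarrow x_i\le 0 \Rightarrow \phi(x_i)\le 1/2 \Rightarrow w\ge c/4$) to reach the threshold $-16$, and the same planar reduction on $L$ followed by Bendixson--Dulac and Poincar\'e--Bendixson. Your divergence $-2+w\phi'(x)$ is in fact the correct expression (the paper states it as identically $-2$, a slip), and your explicit $\omega$-limit-set lift from $L$ back to $\mathbb{R}^3$ is a welcome addition that the paper leaves implicit.
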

\begin{proof}
The proof is detailed in Appendix \ref{proofoftheorem_globalstability}. The proof procedure is: we first prove the plan $L$ is globally asymptotically stable using the Lyapunov approach, then we study the dynamics on the plane to establish the stability of $\hat{X}$.
\end{proof}
 The above proof is indirect as we studied the reduced dynamics on the plane. This approach makes the condition obtained for $c$, i.e., $c\geq -16$ larger than that from the bifurcation theory, ie., $c\geq -123.7215$. On the other hand, we note that when $c=1$, the system is reduced to the original model of the Hopfield network model with adapting synapses as in \cite{dong1992dynamic}. In that case, there is a bounded Lyapunov-type energy function governing the system evolution, such that the dynamics are globally stable when there is a unique equilibrium. In this sense, Theorem \ref{theorem_globalstable} extends the previous result greatly in the low dimensional setting, which is not possible to obtain by directly using the method as in \cite{dong1992dynamic, Zucker2012}.

\section{Implementation in moderate-sized networks.}
We now examine the  RNN-HL model on some moderate-sized networks. We already know that the second minimal structure can exhibit the bifurcation. Thus, to conserve this structure within the whole network, we consider two specific scenarios (see Fig. \ref{fig:large_network} for illustration).   In the first scenario, we consider a network of $12$ nodes where the connections are both unidirectional and bidirectional. We vary the learning parameters for all the bidirectional connections uniformly while leaving parameters for unidirectional weights randomly initialized and fixed. For the second scenario, we consider a structured network consisting of two interconnected subnetworks.   Each subnetwork can have some neurons ($3$ and $5$ neurons have been considered), and their connections are complete. In addition, there are two interconnections associated with one neuron from each subnetwork. In this case, we only vary the learning parameters for the interconnected weights and leave the parameters of connections within each subnetwork being randomized and fixed. The network in the latter scenario is also called a network of RNNs \cite{gao2022introduction,kozachkov2022rnns}. In both cases, the minimal structure is embedded, and by investigating the number of equilibria under the varying learning parameters, we can see its impact on the network storage capacity at different levels.

For the random network setting, the number of equilibria of the RNN-HL model can be multiple or single for different ranges of the learning rate parameter value (see Fig. \ref{fig:large_network_bifurcation_full}A).  Moreover, the learning rate parameter greatly affects the number of equilibria such that it can increase the number manyfold when it decreases. 
For the interconnected network setting, as observed, the $3+3$ RNN-HL system has a similar bifurcation diagram as shown theoretically in the minimal structure (see Fig. \ref{fig:large_network_bifurcation_full}B). In the  $5+5$ RNN-HL system, a similar change of equilibria to the random case happens along the variation of the learning parameter (see Fig. \ref{fig:large_network_bifurcation_full}C). However, the changing manner is even more complex because of the complicated shape of the equilibrium curve. To sum up, the results from interconnected networks indicate that only decreasing a few parameters (the learning rates associated with interconnected weights), is enough to enable the network to exhibit more equilibria. 

\begin{figure}[htbp!]
\centering
\includegraphics[width=7cm]{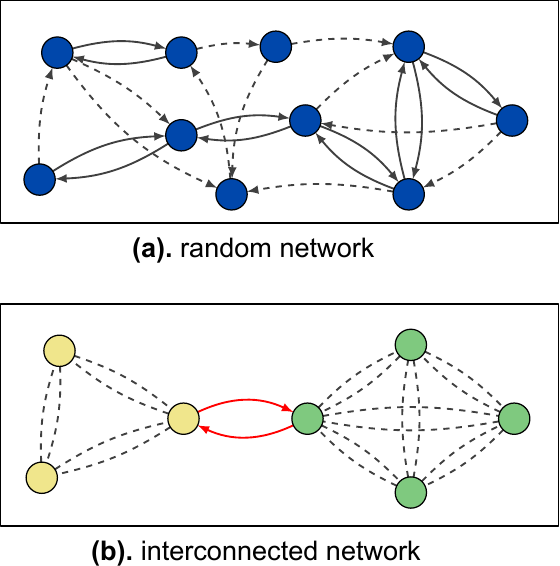}
	\caption{Schematic illustration of moderate-sized RNN-HL networks. (a). the random network; (b). the interconnected network.}\label{fig:large_network}
\end{figure}

\begin{figure*}[htbp!]
\centering
\includegraphics[width=16cm]{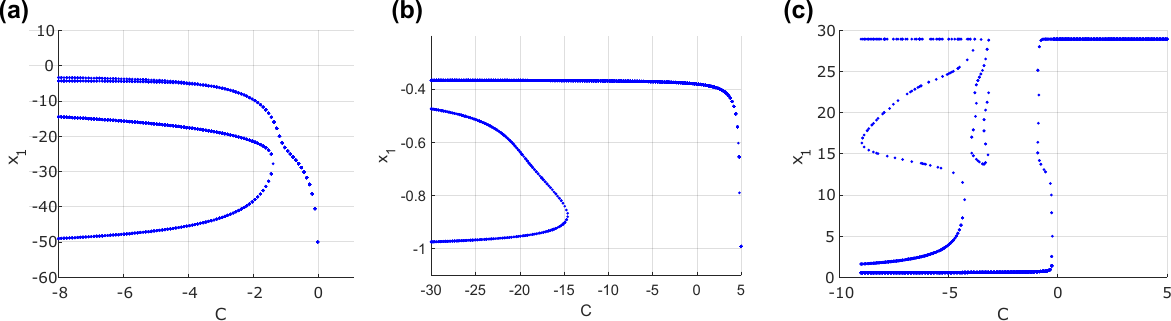}
\caption{RNN-HL network equilibria change as the learning rate varies:  (a). a random network; (b). a $3+3$ interconnected network; (c). a $5+5$ interconnected network. }
 \label{fig:large_network_bifurcation_full}
\end{figure*}

\section{Conclusions}
In this paper, we conducted a dynamical systems analysis of recurrent neural networks with different types of pairwise Hebbian learning rules.  We found that the attractor landscapes of these networks become fundamentally altered as a function of the sign and magnitude of the Hebbian learning rate. Anti-Hebbian mechanisms result in multiple equilibria via a pitchfork bifurcation, while, in contrast, Hebbian mechanisms result in a unique equilibrium and globally stable dynamics. Importantly, empirical results from simulations of moderate-sized networks corroborated these discoveries across various scenarios.
These findings suggest a differential roles for anti-Hebbian
vs. Hebbian mechanisms in shaping the attractor landscape of networks in the context of optimization and learning.

The immediate limitation of this work is that the results are only theoretically valid for minimal models, and our findings are empirically instantiated for larger-size and structured networks. 
While formal analysis of larger networks is more challenging, several tools may be useful to consider in future work, such as the
singularity theory and the equivariant bifurcation theory \cite{golubitsky2012singularities}.
Another important step will be to connect this theory to the analysis and design of recurrent neural networks and learning rules in practical artificial intelligence tasks, especially ones such as meta-learning that involve adaptation of plasticity rules \cite{tyulmankov2022meta}.

\section*{Appendix}
\subsection{Proof of Lemma \ref{eigenvalue_stability}}\label{proofoflemma_stability}
\begin{proof}
First,   we examine  $\lambda_1$. Because of $0<\hat{\phi}<1$, one has
    $\lambda_1=c\hat{\phi}^3( \hat{\phi}- 1) - 1<0$ for all $c>0$. If $c<0$, $\lambda_1$ will be zero at $c=1/(\hat{\phi}^3( \hat{\phi}- 1))$.
   As $\hat{\phi}$ is also dependent on $c$, we need to show the existence of the solution to this equation. Instead of solving $c$ directly, we substitute $\hat{x}=c\hat{\phi}^3$ into the equation, which yields
   \[\hat{x}(\hat{\phi}-1)=\frac{-\hat{x}e^{-\hat{x}}}{1+e^{-\hat{x}}}=1.\]
   The above equation can be solved uniquely for $\hat{x}$, i.e.,
   \[\hat{x}_0=-W_0(\frac{1}{e})-1,\]
   where $W_0$ is the principal branch of the \emph{Lambert W function}.
   Substituting $\hat{x}_0$ back into $\hat{x}=c\hat{\phi}^3$ gives rise the critical value of $c$, namely
   \[c_0=\hat{x}_0(1+e^{-\hat{x}_0})^3\approx -123.7215,\]
    which is consistent with the previously obtained numeric value.
As $\hat{x}_0$ is the unique value such that $\lambda_1=0$, it is obvious that $\lambda_1<0$ for all $c>c_0$. It remains to investigate the situation when $c<c_0$.
Consider 
$\lambda_1=\hat{x}(\hat{\phi}-1)-1=\frac{-\hat{x}e^{-\hat{x}}-1-e^{-\hat{x}}}{1+e^{-\hat{x}}}$.
By checking its derivative  
$(-e^{-\hat{x}}( 1- \hat{x}) + 1)/(e^{-\hat{x}} + 1)^2$, we know that the numerator is in the range $(0,-1)$ for $\hat{x}<0$. Thus, the derivative is always negative, which means that $\lambda_1$ is strictly decreasing for $\hat{x}<0$. We then can conclude that $\lambda_1>0$ for all $c<c_0$. 

Next, we turn to the eigenvalues $\lambda_{2,3}$. Note that $\lambda_{2,3}=\frac{-\lambda_1-3\pm \sqrt{ (\lambda_1+1)(\lambda_1-7)}}{2}$. We already know that $\lambda_1<0$ if $c>0$. And it is easy to check that $\lambda_1\rightarrow -1$ from right when $c\rightarrow +\infty$. Obviously, $\lambda_{2,3}$ will be a pair of complex conjugates for $\lambda_1\in (-1,7)$, and the real part $-\lambda_1-3$ is always negative in this range. For $\lambda_1>7$, $\lambda_{2,3}$ will be always negative. Therefore, we can conclude that $\lambda_{2,3}$ are negative for all $c\in \mathbb{R}$. With the signs of all the eigenvalues having been clarified,  the stability of equilibrium $\hat{X}$ is ready to judge.
\end{proof}

\subsection{Proof of Theorem \ref{theorem_globalstable}}\label{proofoftheorem_globalstability}
\begin{proof}
For statement (i), we consider
the Lyapunov function $V(x_1,x_2,w)=\frac{(x_1-x_2)^2}{2}$. It follows that 
 \[\begin{aligned}
 \dot{V}&=-(x_1-x_2)[x_1-x_2-w(\phi(x_1)-\phi(x_2))]\\
 &=-(x_1-x_2)^2\left(1+w\frac{\phi(x_1)-\phi(x_2)}{x_1-x_2}\right).
 \end{aligned}\]
It is noted that $0<\frac{\phi(x_1)-\phi(x_2)}{x_1-x_2}\leq 1/4$. For $w$, we can consider it in two cases: 1). when $c<0$, from the system equations we know that  $w(t)$ is eventually negative, which implies that $x_1$ and $x_2$ are eventually negative. It follows that $\phi(x_1), \phi(x_2)<1/2$. Then, $w(t)$ will be bounded below by $c/4$. ; 2). when $c\geq 0$, in contrast $w(t)$, $x_1$ and $x_2$ are eventually non-negative. In the latter case, it is easy to see that $\dot{V}<0$.  In case 1), it yields $c/4 \leq w\frac{\phi(x_1)-\phi(x_2)}{x_1-x_2}$ for all the points in $\chi_1$. Then, when $c>-16$, it results in $\dot{V}<0$ for all points that are not in the set $L$ and $\dot{V}=0$ for $x_1=x_2$. Thus, by LaSalle's invariance principle \cite{khalil1996nonlinear}, the plane $L$ is globally asymptotically stable. 

With the result (i), it suffices to study the reduced dynamics 
\begin{equation*}
 \begin{aligned}
 &\dot{x}_1=-x_1+w\phi(x_1)\\
 &\dot{w}=-w+c\phi(x_1)^2,
 \end{aligned} 
\end{equation*}
where $(x_1,w)\in \chi_{0}:=\{|x_1|,|w|\leq |c|\}$ and $c>-4$. For this planar system, it is easy to know that the unique equilibrium $\hat{X}_0=(\hat{x},c\phi(\hat{x})^2)$ is locally asymptotically stable by checking the eigenvalues. One can also check that the vector field points inwards at the boundary of $\chi_0$.  Moreover, the divergence of the vector field is always $-2$. The fixed sign of the divergence implies the non-existence of periodic orbits in $\chi_0$ according to the Bendixson-Dulac criterion \cite{wiggins2006introduction}. Therefore, with the existence of only one stable equilibrium and the non-existence of periodic orbits, we can conclude that the equilibrium $\hat{X}_0$ is globally asymptotically stable in the planar system for the whole set $\chi_0$ by using the Poincaré–Bendixson 
theorem \cite{wiggins2006introduction}. It further implies that $\hat{X}$ is globally asymptotically stable for $\chi_1$ in the original system, which completes the proof.
\end{proof}

\bibliographystyle{unsrt}
\bibliography{reference.bib}

\end{document}